\documentclass{article}

\usepackage{graphicx}

\usepackage[utf8]{inputenc}
\usepackage[utf8]{inputenc}
\usepackage[english]{babel}
\usepackage{amsthm}
\usepackage{amsmath}
\usepackage{amsfonts}
\usepackage{booktabs}
 
 \usepackage{hyperref}

 \usepackage{iclr2018_conference,times}

\newtheorem{theorem}{Theorem}
\newtheorem*{theorem*}{Theorem}
\newtheorem{lemma}{Lemma}
\newtheorem{definition}{Definition}

\newcommand{\M}{\mathcal{M}}
\newcommand{\E}{\mathbb{E}}
\newcommand{\N}{\mathbb{N}}
\newcommand{\R}{\mathbb{R}}
\newcommand{\D}{\mathcal{D}}
\newcommand{\A}{\mathcal{A}}

\newcommand{\pth}[1]{\left( #1 \right) }
\newcommand{\abs}[1]{{\left| #1 \right| }}
\newcommand{\set}[1]{\left\{ #1 \right\} }
\newcommand{\cmnt}[1]{\ignorespaces}
 
\iclrfinalcopy

\title{Train on Validation: \\ Squeezing the Data Lemon}

\author{Guy Tennenholtz\\
	Technion - Israel Institute of Technology
\\
\AND{Tom Zahavy} \\
Technion - Israel Institute of Technology \\
\AND{Shie Mannor} \\
Technion - Israel Institute of Technology
}

\begin{document}

% this must go after the closing bracket ] following \twocolumn[ ...

% This command creates the footnote in the first column
% listing the affiliations and the copyright notice.
% The command takes one argument, which is text to display at the start of the footnote.
% The \icmlEqualContribution command is standard text for equal contribution.
% Remove it (just {}) if you do not need this facility.

%\printAffiliationsAndNotice{}  % leave blank if no need to mention equal contribution
%\printAffiliationsAndNotice{\icmlEqualContribution} % otherwise use the standard text.
\interfootnotelinepenalty=10000
\maketitle
\begin{abstract} 
Model selection on validation data is an essential step in machine learning. While the mixing of data between training and validation is considered taboo, practitioners often violate it to increase performance. Here, we offer a simple, practical method for using the validation set for training, which allows for a continuous, controlled trade-off between performance and overfitting of model selection. We define the notion of on-average-validation-stable algorithms as one in which using small portions of validation data for training does not overfit the model selection process. We then prove that stable algorithms are also validation stable. \cmnt{Finally, we provide experimental evidence of increased test performance, using our method, with a minor trade-off in selection bias for the MNIST and CIFAR-10 datasets on stable algorithms as well as state-of-the-art in neural networks, showing promising results.} Finally, we demonstrate our method on the MNIST and CIFAR-10 datasets using stable algorithms as well as state-of-the-art neural networks. Our results show significant increase in test performance with a minor trade-off in bias admitted to the model selection process.
%\vspace*{-2ex}
\end{abstract}

\section{Introduction}

Machine learning and statistical analysis play a key role in the development of artificial intelligence.
\cmnt{Supervised learning has been a great success in real-world applications, including financial applications, algorithmic trading, biology, energy, and pattern recognition applications for speech and images. Learning constitutes two steps: training a model, and model selection.}
Model selection is the task of selecting a statistical model from a set of candidate models, given data, and is usually done by dividing the data into two sets: training and validation. The validation set is the set of examples used for model selection. The validation set provides an unbiased evaluation of each model fit on the training set, used to compare performance of the different models and decide which one to use (e.g., choosing the number of hidden layers in a neural network). 

We take particular interest in the regime of \textbf{``medium" sized datasets} in which applying learning algorithms is practical, but additional data can boost performance. In such datasets the partitioning of the data can be harmful to overall performance. In order to overcome this problem, we propose using a ``transparent" validation set in which some information can ``leak out" during training (i.e., by training on a portion of the validation data). Training on some of the validation examples in a \textit{controlled} manner, enables the increase of performance with a reasonable and constrained trade-off in the bias of model selection. While such a use of validation is thought of as taboo, we show that this restriction can be relaxed under certain conditions. We claim that training on validation data should not trivially be considered bad practice, and one should carefully use the information in this set for fitting a model, with an attempt to ``squeeze" the most out of the available data, whenever possible.

Ideas for selecting models with the reuse of data have been studied in various fields of research \cite{refaeilzadeh2009cross, cawley2010over, dwork2015reusable, dwork2017guilt, kohavi1995study}. 
A well known, admissible practice known as $K$-Fold Cross-Validation \cite{stone1974cross, geisser1975predictive, schaffer1993selecting, refaeilzadeh2009cross} attempts to reduce the amount of bias caused by a particular choice of validation set. Another method, known as Bootstrapping \cite{efron1982jackknife}, samples the dataset (with replacement), validating on the unsampled results. These methods, however, do not solve the problem of the dichotomic separation of data (that is, both methods could potentially use more of the dataset during the training process). There exist common protocols that attempt to overcome this problem. The most prevalent method that is used in practice is to train on the whole dataset (train and validation) \textit{after} a model has been selected. It has been demonstrated that such an incorrect use of validation data for model selection can lead to a misleading optimistic bias in performance evaluation \cite{cawley2010over}, resulting in an unreliable choice of model. 

The use of validation data for training has a potential for great bias in estimated results, but can be partially done when regularized appropriately \cite{cawley2010over, dwork2017guilt}. In this paper, we show there exists a complete spectrum between not using any of the validation data and using all of the validation data for training. When data is indispensable, this calibration may increase the performance of learning algorithms with a minor trade-off in selection bias. In our procedure, we sample each example from the validation set with probability $p \in [0,1]$ and add it to the training set. Once training is complete, the validation set is used for model selection (including examples that were used for training). In Theorem \ref{thm:MainResult}, we prove that using this procedure the correct model can still be chosen if the learning algorithm is stable and $p \sim \frac{1}{|V|}$, where here $V$ is the chosen validation set. Our results show that when $p$ is small enough, a larger, augmented training set can be used, while simultaneously selecting a model using the partially seen validation set. 
In practice we observed that for iterative algorithms, such as SGD, it is possible to deploy our sampling procedure at each iteration, allowing effective use of the whole dataset for training. Our procedures establish a form of regularization to the validation examples' effect on the training process.

Previous work has shown that using validation data for model selection in an adaptive, multiple step process, may bring undesirable ramifications. ``Freedman's Paradox" \cite{freedman1983note} depicts the potentially dangerous scenarios where results of one statistical procedure performed on the data are fed into another procedure performed on the same data. Freedman observed, through simulation, that when the number of variables is of the same order as the number of data points, insignificant variables can seemingly become significant.  It has been shown that the reuse of the validation set adaptively can greatly increase the risk of spurious inference \cite{dwork2015reusable, dwork2017guilt}. Even stable algorithms, when running sequentially, may result in a procedure that is unstable. Contrary to our work, these studies consider an adaptive process of reusing the validation set, whereas our method selects a model only once after training is complete (i.e., no further use of the data is performed after training). Such adaptive validation procedures, while complementary to our proposed method, must be taken into consideration and performed with extra care. An exhaustive review of these methods can be found in the related work section, and a summary can be found in Table~\ref{tb:comparison}.

Our proposed sampling method is the first principled study of how to use validation data \textit{for training} under theoretical guarantees. In addition, we show experimental evidence that our sampling method can be applied to a broad class of algorithms, including well known stable algorithms such as linear regression, k-Nearest Neighbors \cite{altman1992introduction}, and Support Vector Machines \cite{hearst1998support, shalev2014understanding}. In addition, we conduct experiments using our method on state-of-the-art in neural networks \cite{huang2016densely}, showing $1.1\%$ (absolute) increase in performance on the CIFAR-10 \cite{krizhevsky2009learning} dataset for image classification.

\begin{table*}[t!]
\caption{Comparison of different protocols to our method. Methods which are adaptive have a multiple step procedure which uses previously chosen validation accuracies for following stages. Methods which are marked as complementary can be used in conjunction with our method. Methods which are marked as enlarged are ones which effectively expand the initial training set, except for bootstrapping which trains on copies of identical examples. Finally, methods which are marked as potentially biased are likely to obtain incorrect, biased performance estimates. See the related work section for an exhaustive review of these methods. }

\label{tb:comparison}
\vskip 0.15in
\begin{center}
\begin{small}
\begin{sc}
\begin{tabular}{lcccc}
\toprule
Method & Adaptive & Complementary & Enlarged & Potentially \\
 & Evaluation &  & Training Set & Biased \\
\midrule
k-Fold CV \cite{refaeilzadeh2009cross}
& - & $\surd$ & $\times$ & $\times$ \\
Bootstrap \cite{efron1982jackknife}
& $\times$ & $\surd$ & $\surd$ & $\times$\\
Thresholdout \cite{dwork2017guilt}
& $\surd$ & $\surd$ & $\times$ & $\times$ \\
Train after Test
& - & $\times$ & $\surd$ & $\surd$\\
Our method
& $\times$ & - & $\surd$ & Depends on $p$ \\
\bottomrule
\end{tabular}
\end{sc}
\end{small}
\end{center}
\vskip -0.1in
\end{table*}

%\newpage
\section{Preliminaries}

We consider the classical supervised learning setting, where we wish to predict a random variable $Y$ based on observations from another random variable $X$. Let $S$ be a dataset of $\abs{S}$ random pairs $\set{z_i = (x_i, y_i)}_{i=1}^\abs{S}$ drawn independently from a fixed unknown distribution $\D$ on ${\mathcal{Z} = \mathcal{X}\times\mathcal{Y}}$. A learning algorithm ${\A:\mathcal{Z}^\abs{S}\to H}$ is a mapping from $\mathcal{Z}^\abs{S}$ to a hypothesis class $H$. We measure the quality of predictions using a loss function ${\ell:H\times\mathcal{Z} \to \R_+}$. That is, $\ell(h,z)$ measures the loss of predicting example $z = (x,y)$ using hypothesis $h$. The risk associated with hypothesis $h$ is then defined as the expectation of the loss function:
${
R_\D(h) = L_\D(h) = \E_{z \sim \D} \ell(h,z).
}
$

In general, the risk $R_\D$ cannot be computed because the distribution $\D$ is unknown to the learning algorithm. However, we can compute an approximation, called the empirical risk, by averaging the loss function over the set $S$. We thus define the empirical risk by
${
R_S(h) = L_S(h) = \frac{1}{\abs{S}} \sum_{i=1}^\abs{S} \ell(h, z_i).
}
$
The Empirical Risk Minimization Principle (ERM) states that a learning algorithm should choose a hypothesis $h$ which minimizes the empirical risk $R_S(h)$.

%\vspace*{-2ex}
\subsection{Model Selection}
It is often essential to compare multiple learning algorithms to determine which one works best on the problem at hand. It is thus a common practice to define part of the dataset as a validation (holdout) set. The goal of using a validation set is to provide an unbiased evaluation of a model fit on the training dataset while selecting the best model. 

%The problem of overfitting is common in cases where learning was performed too long or where training examples are rare, causing the learner to adjust to specific random features of the training data, which have no causal relation to the target function. 

We assume the dataset $S$ is divided into a training set $T$ and validation set $V$ (i.e., $S = T \cup V$), and define the empirical loss and validation loss by
$L_T(h) = \frac{1}{\abs{T}} \sum_{z_i \in T} \ell(h, z_i)$ and
$L_V(h) = \frac{1}{\abs{V}} \sum_{z_i \in V} \ell(h, z_i)$, respectively. The former is used as the minimization criterion and the latter as the criterion for model selection. \cmnt{That is, we will train according to $L_T(h)$ and choose the best model according to $L_V(h)$.}

%The generalization of a learning algorithm is often described through the generalization error, which is the difference the expected loss and the empirical loss, that is
%\begin{equation*}
%\epsilon_{gen} = L_T(\A(T)) - L_\D(\A(T)),
%end{equation*}
%where here we used the notation $L_\D = R_\D$. 

In the finite case, let $\set{\M_i}_{i=1}^K$ be a set possible models, where $K \in \N$, and let $L_V^{(i)}(\A(T))$ be the validation loss of algorithm $\A$ using model $\M_i$ when training on $T$. We denote by $\M_{i^*}$ the optimal model choice, where
\begin{equation}
\label{eq:bestChoice}
i^* \in \arg\min_i L_V^{(i)}(\A(T)).
\end{equation}

\subsection{Stability}
A stable algorithm is one in which small change to the input results in a small change to the loss. One form of stability measures the effect of replacement on the output. As we show in Theorem \ref{thm:MainResult} and the Experiments Section, this property has a major effect on our ability to train on validation examples. 

Let $T = \set{z_1, z_2, \hdots, z_n}$ be a dataset of $n$ examples sampled independently from an unknown distribution $\D$. We would expect that for a stable algorithm $\A$, the loss $\ell(\A(T),z_i)$ would be close to $l\left(\A\left(T \backslash \{z_i\} \cup \{z\}\right), z_i\right)$, where $z \sim \D$. The On-Average-Replace-One-Stability (OAROS) criterion proposed in \cite{shalev2014understanding}
and recalled in Definition \ref{def:OAROS} below measures in expectation this effect on the loss by uniformly replacing one example from the training set.

\begin{definition}[On-Average-Replace-One-Stability]
\label{def:OAROS}
Let  ${\epsilon: \N \to \R_+}$ be a monotonically decreasing function. 
We say that a learning algorithm $\A$ is On-Average-Replace-One-Stable with rate $\epsilon(n)$ w.r.t. a loss function $\ell$, if for every distribution $\D$, and training sample $T$ of $n$ data points from $\D$, we have:
\begin{align*}
\E_T \E_{x'\sim \D}\E_{z \sim U(T)} \left| \ell(\A(T), z) - \ell(\A(T \cup \{x'\} \backslash \{z\}), z) \right| \\
\leq \epsilon(n),
\end{align*}
where here $U(T)$ is the uniform distribution on the elements in $T$.
\end{definition}

\noindent Other forms of stability which are closely related to OAROS include uniform stability \cite{bousquet2002stability} and Leave-One-Out stability \cite{shalev2009learnability, elisseeff2003leave}. Intuitively, stability measures the sensitivity to perturbations in the training set. In particular, it is known that stability of the ERM is sufficient for learnability, as stated in the following theorem. It has even be argued that stability is a necessary condition for learnability \cite{mukherjee2006learning}.
%\newpage
\begin{theorem*}
%\label{thm:stabilityNoOverfit}
%\cite{mukherjee2006learning}
Let $\A$ be an ERM algorithm with OAROS rate $\epsilon(n)$, then
\begin{equation*}
\E_{S \sim \D^m} \left[ L_{\D}(\A(S)) - L_S(\A(S)) \right] \leq \epsilon(n).
\end{equation*}
\end{theorem*}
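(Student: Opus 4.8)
The plan is to relate the expected generalization gap to the replace-one quantity appearing in Definition~\ref{def:OAROS} by a symmetrization (``ghost sample'') argument, and then bound that quantity directly by the stability rate. Throughout I identify the sample size $m$ in $S \sim \D^m$ with the argument $n$ of the rate $\epsilon$, so the target bound is $\epsilon(m)$.

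First I would introduce an independent ghost sample $S' = (z_1', \ldots, z_m')$, drawn i.i.d.\ from $\D$ alongside $S = (z_1, \ldots, z_m)$, and for each $i$ define the replaced set $S^{(i)} = S \cup \set{z_i'} \backslash \set{z_i}$, i.e.\ $S$ with its $i$-th example overwritten by the fresh example $z_i'$. The empirical-risk term is immediate by linearity:
\[
\E_S \left[ L_S(\A(S)) \right] = \frac{1}{m} \sum_{i=1}^m \E \left[ \ell(\A(S), z_i) \right].
\]
For the population-risk term I would write $\E_S[L_\D(\A(S))] = \E_{S} \E_{z' \sim \D}[\ell(\A(S), z')]$, and, since each $z_i'$ is a fresh point independent of $S$, average this identity over $i$ by taking $z' = z_i'$:
\[
\E_S \left[ L_\D(\A(S)) \right] = \frac{1}{m} \sum_{i=1}^m \E \left[ \ell(\A(S), z_i') \right].
\]

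The crucial step is the renaming: in the joint distribution over the $2m$ i.i.d.\ variables, $z_i$ and $z_i'$ are exchangeable, so swapping them leaves every expectation invariant. Under this swap $S$ becomes $S^{(i)}$ and $z_i'$ becomes $z_i$, giving $\E[\ell(\A(S), z_i')] = \E[\ell(\A(S^{(i)}), z_i)]$. Subtracting the two displays then yields
\[
\E_S \left[ L_\D(\A(S)) - L_S(\A(S)) \right] = \frac{1}{m} \sum_{i=1}^m \E \left[ \ell(\A(S^{(i)}), z_i) - \ell(\A(S), z_i) \right].
\]
Each summand is exactly the (signed) quantity measured by OAROS with $T = S$, the replaced element $z = z_i$, and the replacement $x' = z_i'$; the uniform average over the index $i$ reproduces the inner $\E_{z \sim U(T)}$ of Definition~\ref{def:OAROS}. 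Bounding the signed sum by the corresponding sum of absolute values and invoking the stability hypothesis delivers the claimed $\leq \epsilon(m)$.

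I expect the main obstacle to be making the renaming step fully rigorous and matching it to the precise form of Definition~\ref{def:OAROS}: one must check that the uniform average over the replaced index $i$ coincides with $\E_{z \sim U(T)}$ over the element being replaced, and that the fresh point $z_i'$ correctly plays the role of $x' \sim \D$ drawn independently of $T$. I would also note that the ERM hypothesis is not actually used in this direction — the displayed identity holds for an arbitrary learning algorithm $\A$, and stability alone yields the bound; ERM becomes relevant only for the converse passage from such a generalization guarantee to learnability.
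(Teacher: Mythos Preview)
Your proposal is correct and follows essentially the same symmetrization argument the paper uses: the paper does not give a standalone proof of this statement, but the key identity
\[
\E_S\E_{x'\sim \D}\E_{z\sim U(S)}\bigl[\ell(\A(S\cup\{x'\}\backslash\{z\}),z)-\ell(\A(S),z)\bigr]=\E_S\bigl[L_\D(\A(S))-L_S(\A(S))\bigr]
\]
is established inside the proof of Lemma~\ref{lemma:condition1} by exactly your swap of a fresh point with a training point, after which OAROS (plus moving the absolute value outside) gives the bound. Your observation that ERM is not actually used here is also correct.
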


Note that stability is not a binary property of an algorithm. In fact, the stability rate $\epsilon(n)$ offers a measure of how stable an algorithm is. As we show in the experiment section, stable algorithms are often more stable in practice (i.e., the bounds on known stability rates are looser in practice).

\section{Main Result}

In this section, we show that the requirement of a segregated validation set can be relaxed under stability assumptions of the learning algorithm. Our main goal is to use validation data for training, limiting the amount of bias presented to the model selection stage. One way of doing this is by defining a form of stability over validation examples when selecting the model. \cmnt{More specifically, our goal is to find algorithms in which the use of a validation example for training would have a bounded and decreasing effect on model selection.}

Let $\mathcal{P}$ be a procedure that receives as input the training set $T$ and the validation set $V$, and outputs an augmented set $\mathcal{P}(T, V) \subseteq T \cup V$.
We wish to choose a procedure $\mathcal{P}$ such that Equation (\ref{eq:bestChoice}) would still hold. Specifically,
\begin{equation}
\label{eq:sameChoice}
\arg\min_i L_V^{(i)}(\A(\mathcal{P}(T, V))) = \arg\min_i L_V^{(i)}(\A(T)).
\end{equation}

To ensure this holds with high probability we choose a procedure for which the ordering of $\{L_V^{(i)}(\A(T))\}$ and $\{L_V^{(i)}(\A(\mathcal{P}(T, V)))\}$ are the same. That is, $\forall i, j$ if $L_V^{(i)}(\A(T)) \geq L_V^{(j)}(\A(T))$, then ${L_V^{(i)}(\A(\mathcal{P}(T, V))) \geq L_V^{(j)}(\A(\mathcal{P}(T, V)))}$. Alternatively, we can choose to randomly sample the smallest possible unit of data (i.e., one example) from $V$ and add it to $T$. Our goal then is to ensure that for any random choice of example passed to the training procedure the loss over the validation examples would not change by much when selecting a model over the \textit{full validation set} (including the sampled example). This would thereby ensure the correct choice of a model. We define a validation-stable algorithm to be one in which such an addition results in small change to the output. The On-Average-Validation-Stability (OAVS) criterion measures in expectation the effect of uniformly choosing one example from $V$ and adding it to $T$ on the loss.

\begin{definition}[On Average Validation Stability]
Let  ${\epsilon: \N \times \N \to \R_+}$ be a monotonically decreasing function in both parameters separately. 
We say that a learning algorithm $\A$ is On-Average-Validation-Stable with rate $\epsilon(n,m)$ if for every distribution $\D$, training sample $T$ of $n$ data points from $\D$, and validation sample $V$ of $m$ data points, we have:
\begin{align*}
\E_{T,V} \E_{y,y' \sim U(V)} \left[ \ell(\A(T), y) - \ell(\A(T \cup \{ y'\}), y)\right] \\
\leq \epsilon(n,m),
\end{align*}
where here $U(V)$ is the uniform distribution on the elements in $V$.
\end{definition}

Indeed, if an algorithm $\A$ is OAVS stable with $\epsilon(n,m)$, and 
\begin{equation}
\label{eq:epsilonCondition}
\epsilon(n,m) < \min_{i \neq j} \abs{L_V^{(i)}(\A(T)) - L_V^{(j)}(\A(T))},
\end{equation}
sampling one unit of data from $V$ will in expectation satisfy Equation (\ref{eq:sameChoice}). Alternatively, since $\epsilon(n,m)$ is monotonically decreasing in $n$ and $m$, then for a satisfactory large training and validation set, condition (\ref{eq:epsilonCondition}) would hold, and thereby condition (\ref{eq:sameChoice}) would hold as well with high probability (by use of Chernoff bound, see the Appendix).

Validation-stability is an important property of a learning algorithm and bears a close relation to the stability of an algorithm. In the next theorem we show that OAROS algorithms are also OAVS (under standard convergence assumptions). A proof can be found in the Appendix.
\begin{theorem}
\label{thm:MainResult}
Let ${T,V \sim \D}$ and denote \mbox{$\tilde{T} = T \cup \{x'\}$} where $x' \sim \D$. Denote $|T| = n, |V| = m$. Let $\A$ be an ERM algorithm which is OAROS stable with $\epsilon_1(n)$ and satisfies
\begin{equation*}
\label{eq:ERM_convergence}
\E_{\tilde{T}} \left[ L_T(\A(T)) - L_{\tilde{T}}(\A(\tilde{T})) \right] \leq \epsilon_2(n),
\end{equation*}
where $\epsilon_2(n)$ is some monotonically decreasing function in $n$. Then it holds that $\A$ is On-Average-Validation-Stable with rate $\epsilon(n,m) = \pth{3 + \frac{1}{m}}\epsilon_1(n) + \epsilon_2(n).$
\end{theorem}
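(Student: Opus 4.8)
The plan is to bound the signed quantity
\[
\Delta := \E_{T,V}\E_{y,y'\sim U(V)}\left[\ell(\A(T),y) - \ell(\A(T\cup\{y'\}),y)\right]
\]
directly, exploiting that $T$ and $V$ are independent and both drawn from $\D$. The first and most important move is to condition on the event $\{y=y'\}$: since $y,y'$ are independent uniform draws from a set of size $m$, this event has probability $1/m$. Writing $A$ for the conditional expectation on $\{y=y'\}$ and $B$ for the conditional expectation on $\{y\neq y'\}$, I get $\Delta = \frac1m A + \frac{m-1}{m}B$. The point of the split is that the two cases have entirely different character: in $B$ the evaluation point $y$ is held out from the training set $T\cup\{y'\}$, so both losses are genuine test losses, whereas in $A$ the appended point equals the evaluation point, so $\ell(\A(T\cup\{y\}),y)$ is a \emph{training} loss.

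Next I would dispatch $B$. Conditioned on $y\neq y'$, the pair $(y,y')$ consists of two distinct elements of the i.i.d.\ set $V$, hence behaves as two fresh independent $\D$-samples, both independent of $T$. Consequently $T\cup\{y'\}$ is distributed as an i.i.d.\ sample of size $n+1$ and $y$ is an independent test point, giving $B = \E_T[L_\D(\A(T))] - \E_{S'\sim\D^{n+1}}[L_\D(\A(S'))]$. I would then insert empirical risks: adding and subtracting $\E[L_T(\A(T))]$ and $\E[L_{S'}(\A(S'))]$ rewrites $B$ as (generalization gap at size $n$) $-$ (generalization gap at size $n+1$) $+\ \E[L_T(\A(T)) - L_{S'}(\A(S'))]$. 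The first term is $\leq \epsilon_1(n)$ by the stability-implies-generalization theorem quoted above; the last is $\leq \epsilon_2(n)$ by the ERM-convergence hypothesis (taking $\tilde T = S'$); and the middle generalization gap at $n+1$ is \emph{nonnegative}, because for ERM the expected empirical risk never exceeds the expected true risk. Hence $B \leq \epsilon_1(n) + \epsilon_2(n)$.

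For $A$ I would invoke exchangeability. Marginally $y\sim\D$ and is independent of $T$, so $\E[\ell(\A(T),y)] = \E_T[L_\D(\A(T))]$, while $T\cup\{y\}$ is an i.i.d.\ sample of size $n+1$ in which $y$ plays the role of a uniformly random element; assuming $\A$ is symmetric in its inputs, $\E[\ell(\A(T\cup\{y\}),y)] = \E_{S'\sim\D^{n+1}}[L_{S'}(\A(S'))]$, i.e.\ exactly an empirical risk. Thus $A = \E_T[L_\D(\A(T))] - \E_{S'}[L_{S'}(\A(S'))]$, which differs from $B$ precisely by the generalization gap at size $n+1$: $A = B + \E_{S'}[L_\D(\A(S')) - L_{S'}(\A(S'))] \leq B + \epsilon_1(n+1) \leq B + \epsilon_1(n)$, using monotonicity of $\epsilon_1$. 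Combining, $A \leq 2\epsilon_1(n)+\epsilon_2(n)$ and $B \leq \epsilon_1(n)+\epsilon_2(n)$, so $\Delta = \frac1m A + \frac{m-1}{m}B \leq (1+\tfrac1m)\epsilon_1(n) + \epsilon_2(n)$.

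This is in fact stronger than, and hence implies, the claimed rate $(3+\tfrac1m)\epsilon_1(n)+\epsilon_2(n)$; the looser constant $3$ is what one gets from a cruder bounding that passes through OAROS with absolute values (introducing spare replace-one terms) instead of through the signed generalization theorem. The main obstacle I anticipate is the bookkeeping that legitimizes the two exchangeability identities --- in particular verifying that ``appending a uniform element of $V$ to $T$'' is, in expectation, indistinguishable from drawing an $(n+1)$-st i.i.d.\ point (which needs $T\perp V$ and symmetry of $\A$), and pinning down the sign of the ERM generalization gap so that the size-$(n+1)$ term in $B$ may be dropped rather than bounded in absolute value.
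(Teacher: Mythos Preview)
Your argument is correct and in fact delivers the sharper rate $(1+\tfrac{1}{m})\epsilon_1(n)+\epsilon_2(n)$, which immediately implies the stated $(3+\tfrac{1}{m})\epsilon_1(n)+\epsilon_2(n)$. The overall skeleton matches the paper's: both proofs open with the same $y=y'$ versus $y\neq y'$ split (this is exactly the first equality in the paper's Lemma~\ref{lemma:finalCondition}), both reduce everything to the stability--generalization identity $\E[L_\D(\A(S))-L_S(\A(S))]\leq\epsilon_1$, and both invoke the ERM-convergence hypothesis once.

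Where you diverge is in bookkeeping. The paper routes the computation through explicit replace-one quantities $\E_{z\sim U(T)}\ell(\A(T\cup\{x'\}\setminus\{z\}),z)$, proving two auxiliary lemmas and then verifying the hypothesis of the second via the main assumptions; each time a generalization gap appears it is controlled by the two-sided OAROS bound, costing an $\epsilon_1$. You instead stay at the level of population and empirical risks, writing $A$ and $B$ directly as differences of $\E L_\D$ and $\E L_{S'}$ terms, and---crucially---you observe that for ERM the size-$(n{+}1)$ generalization gap is \emph{nonnegative} (since $\E L_{S'}(\A(S'))\leq L_\D(h^*)\leq \E L_\D(\A(S'))$), so it may simply be dropped from $B$ rather than bounded by $\epsilon_1$. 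That single sign observation accounts for the improvement from $3$ to $1$ in the leading constant; your own closing remark about ``passing through OAROS with absolute values'' is exactly the mechanism by which the paper picks up the extra $2\epsilon_1(n)$. Your version is both shorter and tighter; the only mild assumption you add beyond the paper's text is symmetry of $\A$ in its training inputs, which is standard for ERM and implicitly used in the paper as well.
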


Theorem \ref{thm:MainResult} gives a fundamental notion of when validation examples can be used for training and to what extent. In the following section we describe several possible procedures that use validation examples and show evidence of their ability to properly select the best model, with an increase to overall performance.

\section{Experiments}

In this section, we describe two sampling procedures for training using a validation set. We then show results of using such procedures on stable algorithms as well as state-of-the-art in neural networks for image classification. Both procedures constitute of a control parameter ${p \in [0,1]}$ which restricts the amount of validation data that is used for training. 

\textbf{The presample procedure} samples examples before training starts. Every example from $V$ is sampled independently with probability $p$ and is added it to $T$.

\textbf{The batch-sample procedure} samples full batches. At every training iteration, a batch is either taken from $T$ or $V$. Specifically, with probability $p$ a batch $B_V$ is used from $V$, and with probability $1-p$ a batch $B_T$ is used from $T$, independent of the other batches.

\cmnt{\textbf{The online-sample procedure} combines the above methods. Each example slot in every batch is sampled from $V$ with probability $p$ and with probability $1-p$ from $T$, independent of the other examples.}

The last method samples at a per-iteration-level, allowing a uniform use of the validation set, where $p$ acts as a reguralizer on validation examples (lower $p$ = higher regularization). A value of $p$ under the batch sampling procedure is comparable to a value of $\tilde{p} = \frac{|T|}{|V|}p$ under the presampling procedure. This, in turn, means that the batch-sample procedure can effectively oversample validation examples (thereby ``seeing" more validation examples than training examples).

\begin{figure}[t!]
\centering
\includegraphics[width=\linewidth]{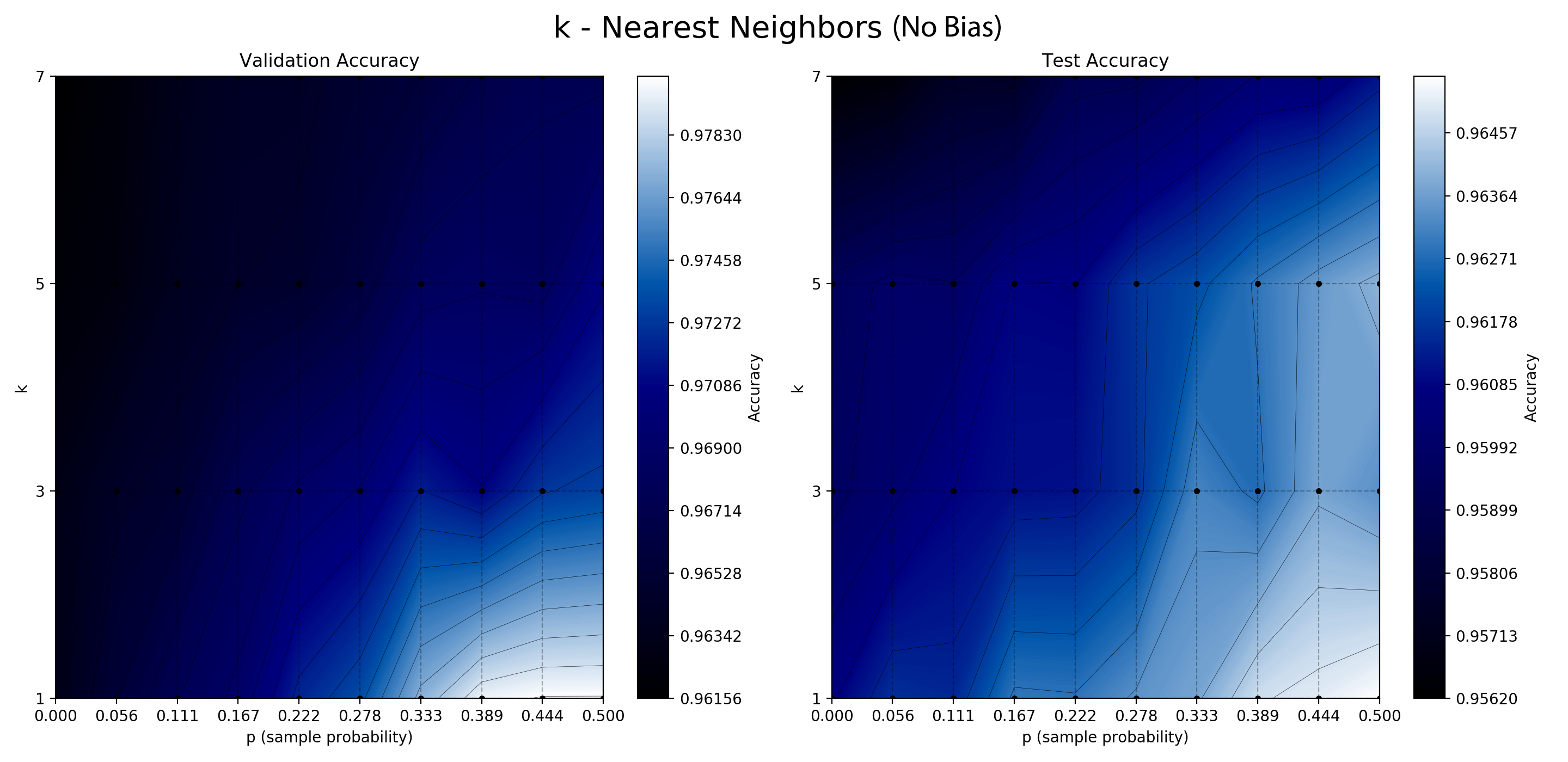}
\includegraphics[width=\linewidth]{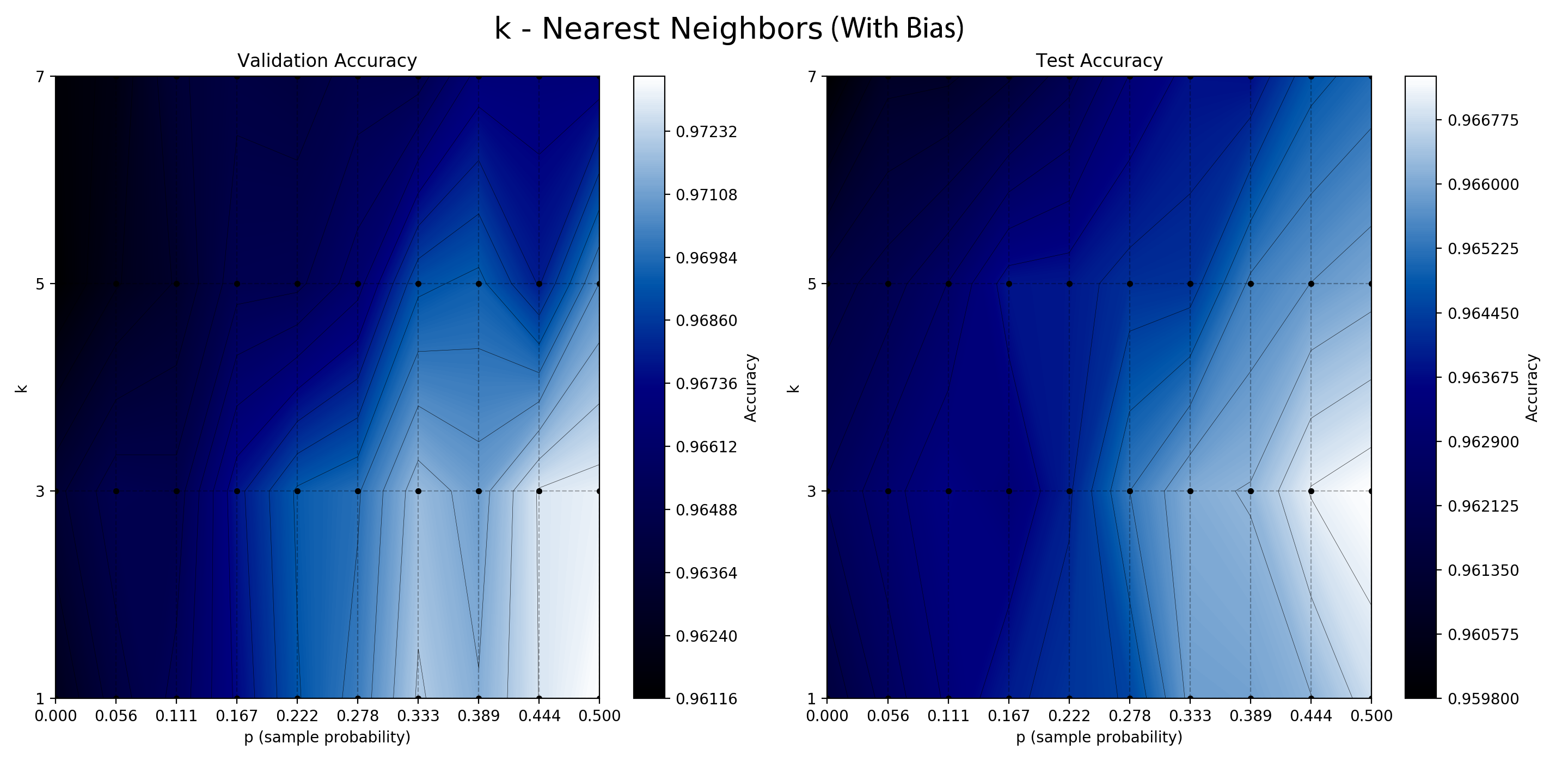}
\caption{Validation and test accuracies for the presample procedure using k-NN. 
Top graphs show results for unbiased validation and test sets, while bottom graphs show results for biased validation and test sets. }
\vspace*{-2ex}
\label{fig:knn}
\end{figure}
\vspace*{-2ex}
\subsection{Stable Algorithms}
The presample procedure was tested on linear regression, k-Nearest Neighbors (k-NN), and Support Vector Machines (SVM), which have been previously shown to be stable \cite{bousquet2002stability, devroye1979distribution, e2000study}. Results show compliance with our approach in the strongest sense. Essentially all stable algorithms that were tested did not overfit the model selection process, allowing for an absolute use of validation examples, while maintaining negligible bias in model selection. We present results for SVM and k-NN on an image classification task. Linear regression results can be found in the Appendix.

\textbf{Dataset:} For all of our experiments we used the MNIST dataset \cite{lecun1998mnist}, containing 70000, 28x28 grayscale images of handwritten digits drawn from 10 classes. 

\textbf{k-NN:} We tested k-NN with the standard Euclidean metric. The dataset was divided to train, validation and test sets with 50000 examples used for training and 10000 examples used for selecting an optimal value of $k$, with values between 1 and 7. Figure \ref{fig:knn} (top) shows accuracy results on the validation and test sets for different values of $k$. Colors of the contour map range from black to blue to white, with black depicting lower accuracy scores, and white depicting higher accuracies. The x-axis of the map shows different $p$ values with $p$ ranging from $0$ to $0.5$. Every point on the grid shows a result of an experiment with a given $p$ and $k$. Test results show a monotone increase in performance for increased values of $p$. Model selection was found to be independent of $p$, allowing for the use of the full validation set for training, with an optimal choice of $k = 1$ giving a $0.7\%$ increase in overall performance.

\textbf{SVM:} We used the radial basis function (RBF) kernel $\exp \left( -\gamma || x_i - x_j || \right)$. The dataset was divided to train, validation and test sets with the validation set used to choose an optimal value of $\gamma$. Figure \ref{fig:svm} (top) shows accuracy results on the validation and test sets for different values of $\gamma$. Each line on the test accuracy plot depicts accuracies for different values of $\gamma$. We noted no change in model selection for all values of $p$. Test accuracies were only slightly improved with less than $0.1\%$ increase in overall performance. 

\textbf{k-NN (Bias Experiment):} An additional benefit of using validation data for training is for dataset shift. Dataset shift is a common problem that occurs when the joint distribution of inputs and outputs differs between training and test stages \cite{quionero2009dataset}. Dataset shift is present in most practical applications with examples of user queries in search engines, utterances inputted to dialogue systems, and data on products in online stores. Such a drift in distribution may have calamitous effects on performance when using outdated data for training. Our sampling procedure, apart from effectively creating a larger dataset for training, can be used as inductive bias on the training procedure for such time-dependent data. When using a validation set of this form, the latest, most recent data sample is usually used for testing. By sampling from the validation set, we can decrease erroneous bias caused by time deformation. 

We tested the effect of biased validation and test sets on our sampling procedure by introducing artificial bias to the validation and test sets. The bias was comprised of random flipping of the images (left right and up down) as well as an increased probability to class 9 (the training set distribution was set to the uniform distribution on all classes). Figure \ref{fig:knn} (bottom) shows accuracy results using k-NN on the validation and test sets for different values of $k$. Model selection was not affected by $p$, allowing for the use of the full validation set for training, with an optimal choice of $k = 3$, and a $0.9\%$ increase in overall performance.

\textbf{SVM (Training Size Experiment):} To examine the effect of the size of the training set we rerun the SVM experiment with a smaller dataset - half in size. Figure \ref{fig:svm} (bottom) shows accuracy results on the validation and test sets for the medium sized training set. While our previous SVM results showed an indiscernible effect on test accuracy, validation sampling had an evident effect on performance on the reduced training set, with $2.3\%$ increase in test accuracy.

\begin{figure}[t!]
\centering
\includegraphics[width=\linewidth]{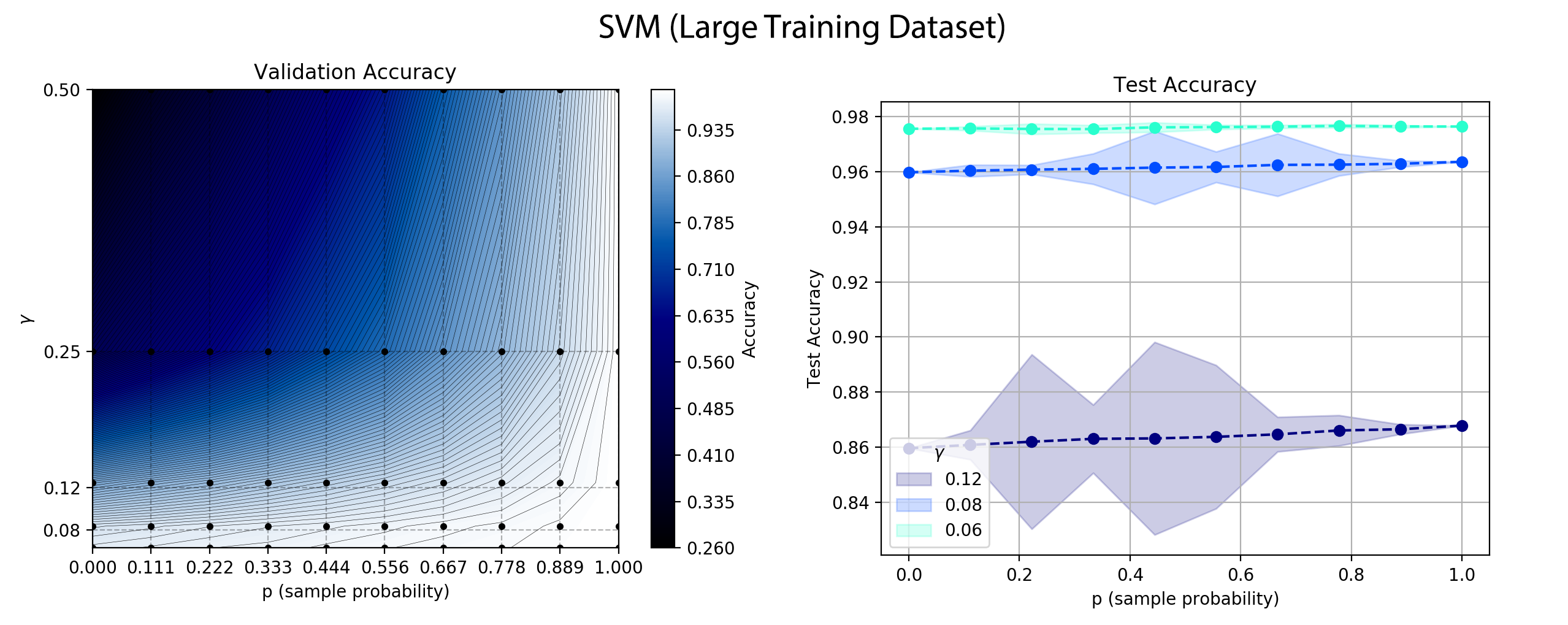}
\includegraphics[width=\linewidth]{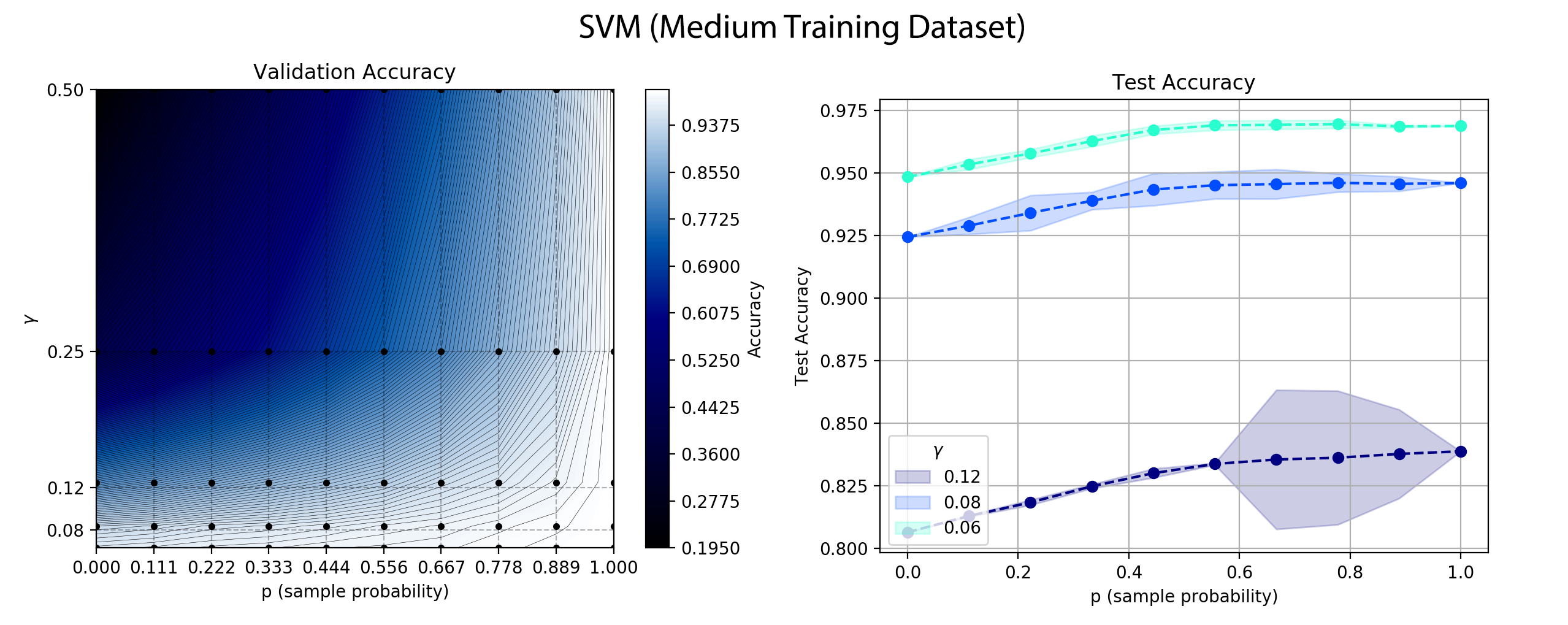}
\caption{Validation and test accuracies for the presample procedure using SVM and different choices of $\gamma$ (the kernel parameter). In the medium training set experiment (bottom), test accuracy increases for $p < 0.5$, and does not changed for $p > 0.5$. In the large training set experiment (top), test accuracy does not change. }
\label{fig:svm}
\vspace*{-2ex}
\end{figure}

\textbf{Summary:} All experiments showed an advantage in validation sampling with stable algorithms. Dataset shift bias and reduced training set size both had a clear influence on sampling efficiency. \cmnt{Stability is not a binary property of an algorithm. In fact, the stability rate $\epsilon(n)$ offers a measure of how stable an algorithm is. Stable algorithms are often more stable in practice.} In addition, all experiments performed did not overfit the model selection process. We suspect this is due to the strong stability property of these algorithms in practice. SVM classified the data using only a small number support vectors, establishing a negligible effect in test accuracy for adding new examples. For this reason, the effect of sampling was only evident when the training set was small enough. In the next section, we show that neural networks, which may have weaker stability properties, are more susceptible to overfitting the model selection process. Nevertheless, as we'll see next, our sampling procedure can be used to increase their performance significantly.

\subsection{DenseNet}

Unlike regression, SVM, and k-NN, neural networks have not yet been shown stable, yet they achieve state-of-the-art results in a variety of machine learning problems. In this section, we demonstrate our sampling procedure on Densely Connected Convolutional Networks (DenseNet) \cite{huang2016densely}. The DenseNet architecture contains dense shortcut connections between layers and was recently shown to achieve state of the art results on CIFAR-10 and 100. \cmnt{The DenseNet architecture contains shortcuts between layers. For each layer, the feature maps of all preceding layers are treated as separate inputs whereas its own feature maps are passed on as inputs to all subsequent layers. This connectivity pattern yields state-of-the-art accuracies on CIFAR-10 and CIFAR-100. }%\cite{krizhevsky2009learning}. 

\textbf{Setup:} The dataset we used was the CIFAR-10 dataset. The dataset contains 32x32 color images drawn from 10 classes. The dataset of size 60000 examples was divided to train, validation, and test sets with 40000 examples used for training and 15000 examples used for model selection (validation set). The data was normalized by its mean and standard deviation, and a standard augmentation was applied to the training data by randomly mirroring, padding, and shifting the images. 

All of our experiments used the vanilla Densenet model with three dense blocks of 12 layers each and a growth rate set to 12. The network was implemented using TensorFlow \cite{abadi2016tensorflow} with an SGD optimizer momentum of 0.9. Weight decay was set to $10^{-4}$. For model selection, we ran experiments on two parameters: learning rate multiplier ($LRM$) and batch size. The initial learning rate was set to $0.1*LRM$ and was lowered to $0.01*LRM$ at epoch $150$, and to $0.001*LRM$ at epoch $225$. Training was done for a total of $300$ epochs. We tested 9 values of $LRM$ ranging from $0.1$ to $8.9$ and 4 batch sizes of $16, 32, 64,$ and $128$. When a parameter was not tested it was set to its vanilla value. $LRM$ was set to a default value of $1$ and batch size to a default value of $64$.
\begin{figure*}[h!]
\centering
\includegraphics[width=0.7\linewidth]{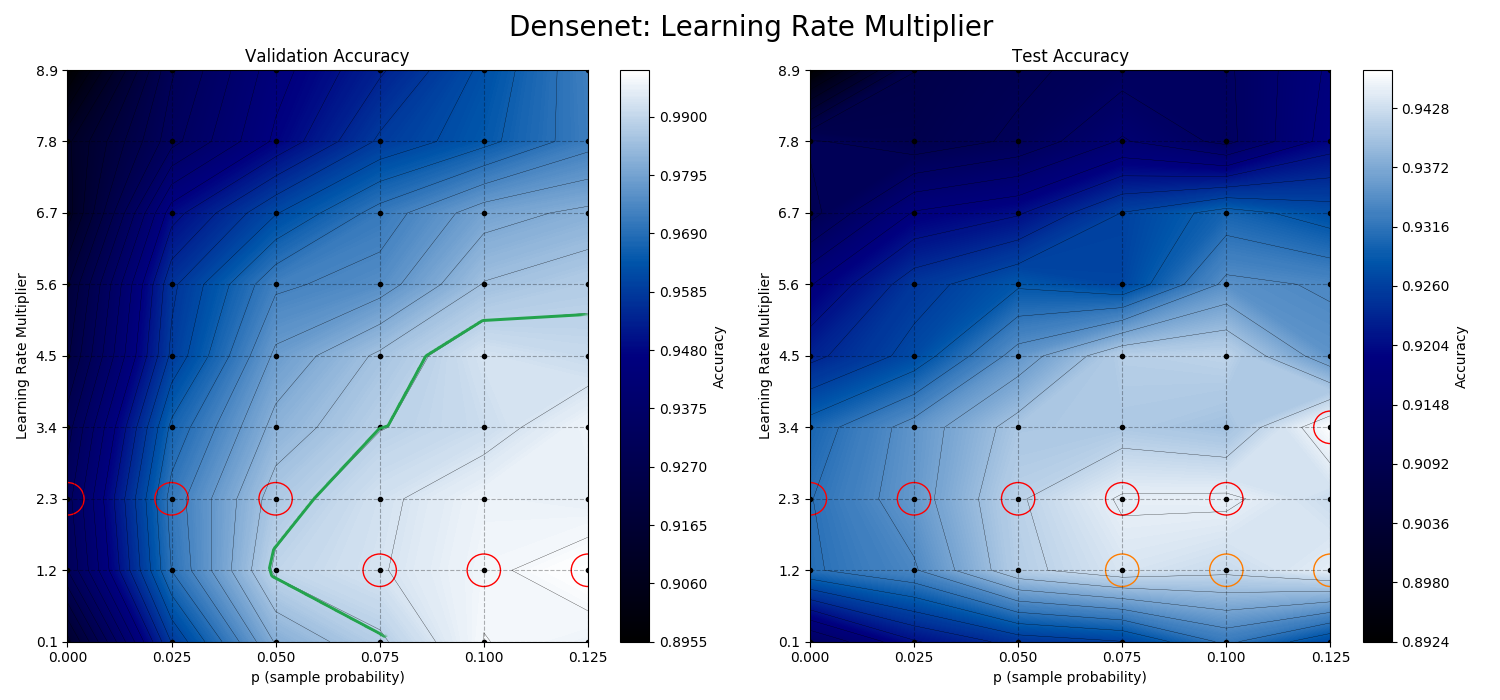}
\includegraphics[width=0.7\linewidth]{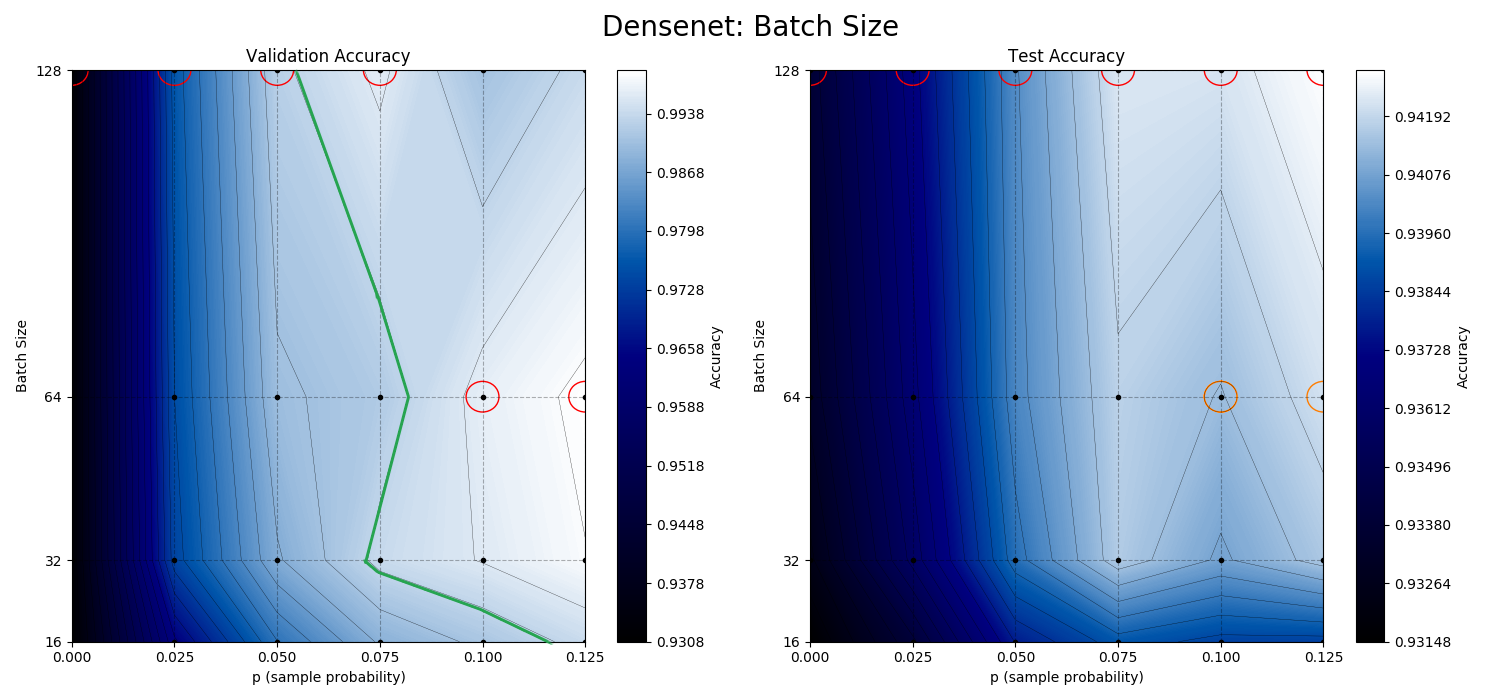}
\caption{Validation and test accuracies of the batch sampling procedure with a batch sampled with probability $p$. Top graphs show different values of learning rate multipliers ($LRM$), while bottom graphs different batch sizes. Red circles show optimal values of $LRM$ / batch size for each value of $p$. Orange circles show test accuracies for the models selected when different from best model. The green line depicts the ``knee" of the validation plot. Test performance scores to the left of the knee achieve accuracies of $94.2\%$ ($LRM$) and $94.1\%$ (batch size) when sampling with probability $p = 0.05$ compared to accuracies of $93.1\%$ ($LRM$) and $93.3\%$ (batch size) without validation sampling. Model selection receives improper bias when $p > 0.075$ ($LRM$) and $p > 0.1$ (batch size). Validation scores increase as $p$ increases. }
\label{fig:desnenet_LRM}
\end{figure*}

\textbf{Results:} The presample procedure (sample from validation w.p. $p$ before training starts) and batch-sample procedure (sample a validation batch w.p. $p$) were tested showing similar results. Figure \ref{fig:desnenet_LRM} shows accuracies of the validation and test sets for changing of the $LRM$ and batch size using the batch-sample procedure. Colors of the contour map range from black to blue to white, with black depicting lower accuracy scores and white depicting higher accuracies. The x-axis of the map shows different values of $p$, ranging from $0$ to $0.125$. \cmnt{Note that a value of $p$ for a training set of size $n$ and validation set of size $m$ under the batch sampling procedure is comparable to a value of $\tilde{p} = \frac{n}{m}p$ under the pre-sampling procedure (in our case $\tilde{p} \approx 2.67p$).}  The y-axis depicts different values of $LRM$ / batch size. Every point on the grid shows a result of an experiment with a given $p$ and $LRM$ / batch size.

For each value of $p$ the best choice of $LRM$ or batch size is marked by a circle. For the validation accuracy plots (left), the red circles mark the best model (which would be chosen). For the test accuracy plots (right), the red circles depict the (true) optimal model, while the orange circles depict the model that would have been selected (by the validation set). Thus, if for a certain value of $p$ the red and orange circles are not at the same value, overfitting of the model selection has occurred. For instance, in Figure~\ref{fig:desnenet_LRM} when $p=0.025$ the chosen model of $LRM=2.3$ achieves a validation accuracy of $96\%$ (top, left) with it also having an optimal test accuracy of $93.5\%$ (top, right). On the other hand, when $p=0.1$ the chosen model of $LRM=1.2$ achieves a validation accuracy of $99\%$ (top, left), whereas its test accuracy achieves sub-optimal accuracy (top, right). The true optimal model for the case of $p=0.1$ is for $LRM=2.3$ with a test accuracy of $94\%$.

As expected, a monotone increase in validation accuracy occurred as $p$ increased (left). This is due to the increased number of validation examples trained upon. While validation accuracy scores had a linear dependence in $p$ for all of the stable algorithms tested, DenseNet showed a strongly concave behavior. Moreover, DenseNet admitted to undeniable selection bias above a certain value of $p$. Specifically, model selection received improper bias when $p > 0.075$ ($LRM$, top right) and $p > 0.1$ (batch size, bottom right). Nevertheless, a valid value of $p$ always existed in which model selection was possible.

The sudden increase in validation accuracy for small values of $p$ reveals the necessity of the validation data. Meanwhile, small values of $p$ introduce negligible bias to the model selection process. This behaviour suggests choosing $p$ at an equilibrium point in which performance and model selection bias are balanced. Such a choice of $p$ cannot rely on test accuracies (see ``Data Dredging" in the Related Work Section). Then, as a rule of thumb, we chose to use the ``knee" of the validation plot for choosing a final value of $p$. The knee of the validation plot represents the point in which overfitting of validation examples has not occurred yet, whereas sufficient information for training has been used. We mark the knee of the validation plot in Figure~\ref{fig:desnenet_LRM} by a green line, dividing the graph into two parts: left of the knee where model selection is possible, and right of the knee where selection is prone to significant bias. Indeed, all of our experiments suggest model selection before the knee did not overfit the model selection process. Test performance scores at the knee show increased accuracies of $94.2\%$ ($LRM$) and $94.1\%$ (batch size) compared to accuracies of $93.1\%$ ($LRM$) and $93.3\%$ (batch size) without validation sampling.

\textbf{Summary:} There exists a clear advantage to sampling validation examples during training. While stable algorithms were remarkably robust in model selection (even when the full validation set was used), our experiments on DenseNet show a clear limit to the sampling magnitude. Nevertheless, both the presample and batch-sample procedures significantly improved results, with an advantage of effectively using more validation data using batch samples. 

\section{Related Work}
\label{section:relatedWork}
In this section, we discuss related methods for model selection and how our work is positioned relative to them. Table~\ref{tb:comparison} contains a summary of these methods.

\textbf{k-Fold Cross-Validation. }
In order to avoid overfitting in model selection, it is necessary to evaluate the models on a holdout (not used in training) validation set. However, by partitioning the available data, we drastically reduce the number of samples which can be used for learning the model, and the results can depend on a particular random choice for the pair of train or validation sets. One way to overcome this is by the use of $k$-fold Cross-Validation \cite{refaeilzadeh2009cross}. In $k$-fold Cross-Validation, the dataset is partitioned into $k$ equal sized sets. Of the $k$ sets, a single set is retained as the validation set, and the remaining $k-1$ sets are used as a training set. \cmnt{The cross-validation process is then repeated $k$ times, with each of the $k$ sets used exactly once as the validation set.} The $k$ results from the folds are then averaged to produce a single estimation. Leave-one-out validation is a special case of Cross-Validation, where all folds consist of a single instance. This type of validation is, of course, more expensive computationally.

While k-fold Cross-Validation reduces the bias of improper performance estimation, it is a computationally expensive process. Moreover, while $k$-fold Cross-Validation supposedly uses all of the data for training, each fold only trains on part of the data. Our sampling method can be used in conjunction with $k$-fold Cross-Validation, where in each fold we allow sampling of validation examples during training. This coupling can decrease bias while improving test performance. As a matter of fact, our sampling method readily achieves the $k$-fold Cross-Validation objective, as it reduces the bias of improper performance estimation. Therefore, our method can be similarly used as an alternative to $k$-fold Cross-Validation, with the benefit of increased test performance.

\textbf{Bootstrapping. }
Bootstrapping \cite{efron1982jackknife} \textit{for model selection} is a technique which involves a relatively simple procedure where properties of an estimator are estimated by sampling from an approximating distribution. Given a dataset of size $n$, a bootstrap sample is created by sampling $n$ instances uniformly from the data (with replacement). \cmnt{Each example is seen with probability $\left( 1- \frac{1}{n}\right)^n$. That is, for $n$ large enough, each example is seen with probability $\sim \frac{1}{e}$.} The accuracy estimate is then derived by using the bootstrap sample for training and the rest of the instances for testing. Such a procedure allows for a data split of approximately $37\%$ for validation. It has been shown that the bootstrap procedure works well under stability assumptions of the used algorithm (i.e., the ``bootstrap world" closely approximates the real world) \cite{kohavi1995study}.

While the bootstrapping technique effectively reuses the dataset examples, it maintains a definite segregation between train and validation examples. The enlarged training set is due to duplicate examples, and does not use new information for training. Furthermore, the number of instances sampled in the bootstrap procedure is not interchangeable with $p$, the sample probability of our method. While sampling more instances in the bootstrap method would effectively create a larger training set, the validation set's size will necessarily decrease. Contrary to this, our method does not reduce the size of the validation set. Nevertheless, one could conceive of a bootstrap procedure which would essentially achieve the same type of objective as ours. We leave this as a future direction for future work.

\textbf{Thresholdout. }
The practice of data analysis is an adaptive process. In practice, new hypotheses are proposed after seeing the results of previous ones, models are chosen and tuned on the basis of the obtained results, and datasets are shared and reused. This adaptive process can be harmful, in particular, when subject to model selection \cite{dwork2015reusable}.
Threshholdout (Reusable Holdout Method) \cite{dwork2017guilt} is a method of coping with adaptivity by providing means of reliably verifying the results of an arbitrary adaptive data analysis process. 
As in standard analysis, data is split into a training set and a validation set. The algorithm ensures that the validation set maintains the essential guarantees of ``fresh data" over the course of many estimation steps by accessing the validation set only through a suitable differentially private algorithm. \cmnt{More specifically, the algorithm introduces noise to validation results when the difference between train and validation results passes a given threshold.}

Thresholdout is an important tool for increasing the reliability of validation data in an adaptive process. Our method, on the other hand, considers a non-adaptive setting where a model is selected in a single stage. It can thereby be integrated with Thresholdout to ensure the validity of the model selection in an adaptive manner while using part of the validation data in each step of the training procedure.

\textbf{Train after Test. }
Practitioners often follow a common approach of using pure validation data for model selection and then training on the full dataset (including the validation set examples) once a model has been chosen. Such a process is expected to introduce an optimistic bias into the performance estimates \cite{cawley2010over}. For example, when minimizing the cross-validation performance, if the cross-validation statistic has a non-zero variance there is the possibility of over-fitting the model selection criterion. Figure \ref{fig:desnenet_LRM} depicts an example in which this happens when selecting an optimal learning rate multiplier in the DenseNet architecture.

In order to avoid bias in performance estimates, one should potentially consider penalizing the validation statistic. By sampling from the validation, we allow the use of the validation examples while effectively regularizing this effect. In addition, our method contains a range of values, $p \in [0,1]$, which allows for a continuous adjustment of this inherent trade-off. 

\textbf{Data Dredging}. ``Selective Reporting'' (p-hacking, data dredging) occurs when researchers try out several statistical hypotheses and then selectively report those that produce significant results \cite{nuzzo2014scientific, head2015extent}. Some examples include dropping outliers post-analysis, reporting only those variables which produce significant results, presenting results on specific data which yields significant results and adapting analysis to improve test results. More related to our work is an approach in which the training set size is increased until the best result is reached. 

Our method departs from these practices in two factors. First, our method is not adaptive. That is, we do not use previous validation selections to select further models. Second, we do not use test performance scores in our selection process. Our choice of $p$ relies solely on validation scores, with the choice $p$ in the knee region of the validation plot, see Figure \ref{fig:desnenet_LRM}.

\section{Conclusion}
In this paper, we demonstrated the effectiveness of using the validation set for training. While the immoderate use of the validation set for training may potentially introduce a severe form of selection bias, our sampling procedure allows for a continuous, controlled balance between performance and model selection bias. Furthermore, we have shown that stable algorithms are less prone to overfit the model selection process, with experiments showing that even severe sampling does not bias selection. We provide a per-batch sampling procedure for online algorithms and offer a simple heuristic for choosing a value of $p$ at the knee of the validation plot, a point for which model selection bias and performance are balanced. Finally, our results on DenseNet suggest that our sampling procedure can be used in state-of-the-art deep networks, achieving significantly improved performance.

% \newpage
\bibliography{bibfile}
\bibliographystyle{icml2018}

\newpage
% \onecolumn
%\cmnt{
\section{Appendix}

\subsection{Proof of Main Theorem}
% \begin{lemma}
% *****NOTE: This is actually what we wanted to prove in order for Condition 2 to be sufficient. But I'm not sure if this is true.***** \\
% Suppose OAROS stability holds (Condition 1). Then
% \begin{equation}
% \label{eq:OAROSEquivalence}
% \abs{\E_S[\E_{x'\sim \D}L_S(\A(S \cup \{x'\})) - \E_{x'\sim \D}L_D(\A(S \cup \{x'\}))]} \leq \epsilon_1(n+1)
% \end{equation}
% \end{lemma}
% \begin{proof}
% ....
% \end{proof}

\begin{proof}
We start by proving two standard technical lemmas. 
%\cmnt{
\begin{lemma}
\label{lemma:condition1}
Suppose OAROS stability holds with $\epsilon_1(n)$. Then
\begin{equation*}
\abs{ \E_{T} \E_{y,y' \sim \D} \left[ \ell(\A(T \cup \{ y'\}), y) - \ell(\A(T \cup \{ y\}), y) \right]} \leq \epsilon_1(n)
\end{equation*}
\end{lemma}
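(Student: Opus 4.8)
The plan is to recognize the quantity inside the lemma as an instance of the OAROS expression applied to a training set of size $n+1$ rather than $n$. The key conceptual move is to treat the pair $(T,y)$, where $T\sim\D^n$ and $y\sim\D$ are drawn independently, as a single i.i.d.\ sample $W = T\cup\{y\}$ of $n+1$ points together with a distinguished element $y$. By exchangeability of i.i.d.\ draws, the joint law of $(T,y)$ is identical to first drawing $W\sim\D^{n+1}$, then choosing $y$ uniformly from $W$ and setting $T = W\setminus\{y\}$. Under this identification $\A(T\cup\{y\}) = \A(W)$ and $\A(T\cup\{y'\}) = \A(W\setminus\{y\}\cup\{y'\})$, so the integrand becomes, up to sign, exactly the replace-one perturbation measured at the replaced point, which is what OAROS controls.

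Concretely, first I would apply the triangle inequality for expectations (Jensen) to move the absolute value inside:
\[
\abs{\E_T \E_{y,y'\sim\D}[\cdots]} \le \E_T \E_{y,y'\sim\D}\abs{\ell(\A(T\cup\{y'\}),y) - \ell(\A(T\cup\{y\}),y)}.
\]
Next I would rewrite the right-hand side using the exchangeability identity above, replacing $\E_T\E_{y\sim\D}$ by $\E_{W\sim\D^{n+1}}\E_{y\sim U(W)}$ and $T\cup\{y\}$ by $W$. After this substitution the expression reads
\[
\E_{W\sim\D^{n+1}}\E_{y'\sim\D}\E_{y\sim U(W)}\abs{\ell(\A(W\setminus\{y\}\cup\{y'\}),y)-\ell(\A(W),y)},
\]
which is verbatim the OAROS expression (with the substitutions $T\mapsto W$, $z\mapsto y$, $x'\mapsto y'$) for a training set of size $n+1$. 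OAROS stability therefore bounds it by $\epsilon_1(n+1)$, and since $\epsilon_1$ is monotonically decreasing we have $\epsilon_1(n+1)\le\epsilon_1(n)$, which gives the stated bound.

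I expect the main (and essentially only) obstacle to be the exchangeability argument: one must verify carefully that drawing $T$ and $y$ independently makes $y$ a uniformly random element of the $(n+1)$-point set $W = T\cup\{y\}$ conditioned on $W$, so that the fresh draw $y\sim\D$ can legitimately be reinterpreted as the uniform choice $y\sim U(W)$ appearing in OAROS. The remaining ingredients — Jensen's inequality and the use of monotonicity of $\epsilon_1$ to pass from $\epsilon_1(n+1)$ to $\epsilon_1(n)$ — are routine, so the whole argument reduces to setting up this reindexing correctly.
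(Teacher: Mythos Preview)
Your proposal is correct and follows essentially the same route as the paper: both use the exchangeability of i.i.d.\ draws to identify the expression with the OAROS quantity at sample size $n+1$, then apply Jensen and the monotonicity $\epsilon_1(n+1)\le\epsilon_1(n)$. The only cosmetic difference is that the paper routes the computation through the intermediate generalization-gap identity $\E_{\tilde S}\bigl[L_\D(\A(\tilde S))-L_{\tilde S}(\A(\tilde S))\bigr]$ before matching it to OAROS, whereas you match directly after moving the absolute value inside.
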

\begin{proof}
We have that
\begin{align*}
&\E_{S} \E_{y,y' \sim \D} \left[ \ell(\A(S \cup \{ y'\}), y) - \ell(\A(S \cup \{ y\}), y)\right] = \\
&\E_{\tilde{S}} \left[ L_{\D}(\A(\tilde{S})) - L_{\tilde{S}}(\A(\tilde{S})) \right],
\end{align*}
Then it is enough to show that 
\begin{align*}
&\E_S \E_{x'\sim \D}\E_{z \sim U(S)} \left[ \ell(\A(S \cup \{x'\} \backslash \{z\}), z) - \ell(\A(S), z) \right] = \\
&\E_{S} \left[ L_{\D}(\A(S)) - L_{S}(\A(S)) \right],
\end{align*}
then we are done by definition of OAROS stability and Jensen's inequality. \\
Denote $S = \left\{ z_1, \hdots z_n \right\}$. Since $S$ and $x'$ are drawn i.i.d from $\D$ we have
\begin{align*}
&\E_{S} L_{\D}(\A(S)) = \\
&\E_{S}\E_{x' \sim \D} \ell(\A(S), x') = \\
&\E_{S}\E_{x' \sim \D} \ell(\A(S \cup \{x'\} \backslash\{z_i\} ), z_i) = \\
&\E_{S}\E_{x' \sim \D} \E_{z \sim U(S)} \ell(\A(S \cup \{x'\} \backslash \{z\}), z).
\end{align*}
On the other hand,
\begin{align*}
&\E_{S} L_{S}(\A(S)) = \\
&\E_{S}\E_{z \sim U(S)} \ell(\A(S), z) = \\
&\E_{S}\E_{x' \sim \D} \E_{z \sim U(S)} \ell(\A(S), z).
\end{align*}
The proof is complete since $\epsilon(n+1) \leq \epsilon(n)$.
\end{proof}
%}
\begin{lemma}
\label{lemma:finalCondition}
Suppose OAROS Stability holds with $\epsilon_1(n)$ and also
\begin{align}
\nonumber
\E_{T}\E_{x'\sim \D}[ &\E_{z\sim U(T)}  \ell(\A(T \cup \{x'\} \backslash \{z\}, z) \\&- E _{y \sim \D}\ell(\A(T \cup \{x'\}), y)) ] 
\leq \tilde{\epsilon}(n).
\label{eq:lemmaFinalConditionAssumption}
\end{align}
Then
\begin{align*}
\E_{T,V} \E_{y,y' \sim U(V)} \left[ \ell(\A(T), y) - \ell(\A(T \cup \{ y'\}), y)\right] \\
\leq \pth{1 + \frac{1}{m}}\epsilon_1(n) + \tilde{\epsilon}(n).
\end{align*}
\end{lemma}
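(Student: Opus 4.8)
The plan is to reduce the claimed $U(V)$-based quantity to the two $\D$-based quantities already under control, namely Lemma~\ref{lemma:condition1} and hypothesis~(\ref{eq:lemmaFinalConditionAssumption}). The essential observation is that $y$ and $y'$ are drawn independently from the \emph{same finite} set $V$ of size $m$, so with probability $\frac{1}{m}$ we get a collision $y'=y$ and with probability $\frac{m-1}{m}$ we get two distinct elements; since the coordinates of $V$ are i.i.d.\ from $\D$, conditioning on this collision event lets me replace each $U(V)$-expectation by a clean expectation over fresh $\D$-samples. First I would write, for fixed $T$,
\[
\E_{y,y'\sim U(V)}\pth{\ell(\A(T),y) - \ell(\A(T\cup\{y'\}),y)} = \frac{1}{m}A'(T) + \frac{m-1}{m}B'(T),
\]
where $A'(T) = \E_{y\sim\D}[\ell(\A(T),y) - \ell(\A(T\cup\{y\}),y)]$ is the diagonal ($y'=y$) contribution and $B'(T) = \E_{y,y'\sim\D}[\ell(\A(T),y) - \ell(\A(T\cup\{y'\}),y)]$ is the off-diagonal ($y'\neq y$) contribution. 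Taking $\E_T$ and setting $A = \E_T A'(T)$, $B = \E_T B'(T)$, the target becomes $\frac{1}{m}A + \frac{m-1}{m}B$.

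Next I would bound $B$. Since $\E_{y\sim\D}\ell(\A(T),y) = L_\D(\A(T))$ and $\E_{y,y'\sim\D}\ell(\A(T\cup\{y'\}),y) = \E_{\tilde T}L_\D(\A(\tilde T))$ with $\tilde T = T\cup\{y'\}$ of size $n+1$, we obtain $B = \E_T L_\D(\A(T)) - \E_{\tilde T} L_\D(\A(\tilde T))$. Invoking the exact identity already established inside the proof of Lemma~\ref{lemma:condition1}, namely $\E_T L_\D(\A(T)) = \E_T\E_{x'\sim\D}\E_{z\sim U(T)}\ell(\A(T\cup\{x'\}\backslash\{z\}),z)$, this $B$ is precisely the left-hand side of hypothesis~(\ref{eq:lemmaFinalConditionAssumption}), so $B \leq \tilde\epsilon(n)$.

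Then I would relate $A$ to $B$. Because the integrand defining $A'(T)$ does not depend on $y'$, their difference is $A - B = \E_T\E_{y,y'\sim\D}[\ell(\A(T\cup\{y'\}),y) - \ell(\A(T\cup\{y\}),y)]$, whose absolute value is bounded by $\epsilon_1(n)$ via Lemma~\ref{lemma:condition1}; hence $A \leq B + \epsilon_1(n) \leq \tilde\epsilon(n) + \epsilon_1(n)$. Combining, and using that the weights $\frac{1}{m}$ and $\frac{m-1}{m}$ are nonnegative and sum to one,
\[
\frac{1}{m}A + \frac{m-1}{m}B \leq \frac{1}{m}\pth{\tilde\epsilon(n)+\epsilon_1(n)} + \frac{m-1}{m}\tilde\epsilon(n) = \tilde\epsilon(n) + \frac{1}{m}\epsilon_1(n),
\]
which is at most $\pth{1+\frac{1}{m}}\epsilon_1(n) + \tilde\epsilon(n)$ (indeed slightly tighter, since one may add the nonnegative quantity $\epsilon_1(n)$).

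I expect the main obstacle to be the very first step: correctly handling the fact that $y$ and $y'$ come from the common finite sample $V$ rather than from two independent $\D$-draws. The collision event $y=y'$, which has probability $\frac{1}{m}$, is exactly what forces the diagonal term $A$ into the bound and produces the $\frac{1}{m}$ factor in the rate; making the i.i.d./exchangeability substitutions rigorous so that each conditional $U(V)$-expectation collapses to a clean $\D$-expectation is the delicate bookkeeping. Once this decomposition is in place, the two remaining estimates are direct invocations of Lemma~\ref{lemma:condition1} and hypothesis~(\ref{eq:lemmaFinalConditionAssumption}).
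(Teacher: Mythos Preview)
Your proposal is correct and follows essentially the same route as the paper: the collision decomposition of $\E_{y,y'\sim U(V)}$ into a diagonal ($y'=y$, weight $\tfrac{1}{m}$) and off-diagonal ($y'\neq y$, weight $\tfrac{m-1}{m}$) part, followed by Lemma~\ref{lemma:condition1} to handle the diagonal correction and hypothesis~(\ref{eq:lemmaFinalConditionAssumption}) to bound the off-diagonal term $B$. Your treatment of $B$ via the exact identity $\E_T L_\D(\A(T)) = \E_T\E_{x'}\E_{z\sim U(T)}\ell(\A(T\cup\{x'\}\backslash\{z\}),z)$ is in fact cleaner than the paper's intermediate manipulation and, as you observe, yields the sharper constant $\tilde\epsilon(n)+\tfrac{1}{m}\epsilon_1(n)$; the paper's version picks up an extra $\epsilon_1(n)$ by splitting $B$ once more through OAROS before invoking the hypothesis.
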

\begin{proof}
We have that
\begin{align*}
&\E_{T,V} \E_{y,y' \sim U(V)} \left[ \ell(\A(T), y) - \ell(\A(T \cup \{ y'\}), y)\right] = \\
&\E_{T} \E_{y,y' \sim \D} \left[ \ell(\A(T), y) - (1-\frac{1}{m})\ell(\A(T \cup \{ y'\}), y) - \right. \\
&~\qquad \qquad \left. \frac{1}{m} \ell(\A(T \cup \{ y\}), y) \right] \leq \\
& \E_{T} \E_{y,y' \sim \D} \left[ \ell(\A(T), y) - (1-\frac{1}{m})\ell(\A(T \cup \{ y'\}), y) + \right.\\
&~\qquad \qquad \left. \frac{1}{m} \pth{\ell(\A(T \cup \{ y'\}), y) + \epsilon_1(n)} \right] = \\
& \E_{T} \E_{y,y' \sim \D} \left[ \ell(\A(T), y) - \ell(\A(T \cup \{ y'\}), y) + \frac{\epsilon_1(n)}{m} \right] = \\
& \E_{T} \E_{y' \sim \D} \bigg[ \E_{y \sim \D}\ell(\A(T), y) - \E_{z\sim U(T)}  \ell(\A(T \cup \{x'\} \backslash \{z\}, z) + \\
&~\qquad \qquad \E_{z\sim U(T)}  \ell(\A(T \cup \{x'\} \backslash \{z\}, z) - \E_{y \sim \D}\ell(\A(T \cup \{ y'\}), y) + \\
&~\qquad \qquad \frac{\epsilon_1(n)}{m} \bigg] = \\
& \E_{T} \E_{y' \sim \D} \bigg[ \E_{z \sim U(T)}\ell(\A(T), z) - \E_{z\sim U(T)}  \ell(\A(T \cup \{x'\} \backslash \{z\}, z) + \\
&~\qquad \qquad \E_{z\sim U(T)}  \ell(\A(T \cup \{x'\} \backslash \{z\}, z) - \E_{y \sim \D}\ell(\A(T \cup \{ y'\}), y) + \\
&~\qquad \qquad \frac{\epsilon_1(n)}{m} \bigg] \leq \\
&\epsilon_1(n) + \tilde{\epsilon}(n) + \frac{\epsilon_1(n)}{m}.
\end{align*}
In the first inequality we used the result of Lemma \ref{lemma:condition1}, and in the final inequality we used OAROS stability and the assumption in Equation (\ref{eq:lemmaFinalConditionAssumption}).
\end{proof}

Continuing with the proof of the theorem, we restate the theorem for the ERM from the Preliminaries Section:
\begin{theorem}
\label{thm:stabilityNoOverfit}
Let $\A$ be a learning algorithm with OAROS stability rate $\epsilon(n)$, then
\begin{equation*}
\E_{S \sim \D^m} \left[ L_{\D}(\A(S)) - L_S(\A(S)) \right] \leq \epsilon(n).
\end{equation*}
\end{theorem}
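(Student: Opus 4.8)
The plan is to bound the generalization gap $\E_{S}[L_\D(\A(S)) - L_S(\A(S))]$ by rewriting the two terms so that they differ only through a single replace-one perturbation, and then to invoke the OAROS definition directly. The central device is a symmetry (exchangeability) argument that rewrites the true risk $\E_S L_\D(\A(S))$ as the expected loss of the trained hypothesis on a \emph{replaced} training point.

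First I would expand $\E_S L_\D(\A(S)) = \E_S \E_{x'\sim\D}\, \ell(\A(S), x')$, where $x'$ is a fresh ``ghost'' sample drawn independently from $\D$. Writing $S = \{z_1,\dots,z_n\}$, the points of $S$ and $x'$ are all i.i.d.\ from $\D$, so the configuration $(S, x')$ is exchangeable: swapping $x'$ with any fixed $z_i$ and evaluating the loss at the swapped-in point leaves the expectation unchanged, giving $\E_S \E_{x'}\ell(\A(S), x') = \E_S \E_{x'}\ell(\A(S \cup \{x'\}\setminus\{z_i\}), z_i)$ for every $i$. Averaging over an index $i$ drawn uniformly then yields
\begin{equation*}
\E_S L_\D(\A(S)) = \E_S \E_{x'\sim\D}\E_{z\sim U(S)}\, \ell(\A(S \cup \{x'\}\setminus\{z\}), z).
\end{equation*}
The empirical term is immediate, $\E_S L_S(\A(S)) = \E_S \E_{z\sim U(S)}\ell(\A(S), z)$, and since this expression does not involve $x'$ I can insert a dummy $\E_{x'\sim\D}$ to put it in the same form.

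Subtracting the two expressions, the generalization gap equals $\E_S \E_{x'}\E_{z\sim U(S)}[\ell(\A(S\cup\{x'\}\setminus\{z\}), z) - \ell(\A(S), z)]$, and bounding this by its absolute value (triangle inequality, equivalently Jensen) matches exactly the quantity controlled by OAROS stability, yielding the bound $\epsilon(n)$. The only nontrivial step is the exchangeability argument: one must verify that replacing a \emph{uniformly random} training index by the fresh sample and evaluating the loss there is genuinely distribution-preserving, which is precisely where the i.i.d.\ hypothesis enters. Everything downstream — the dummy-variable insertion and the single triangle-inequality step — is mechanical, so I expect no further obstacles.
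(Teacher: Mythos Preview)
Your proposal is correct and is essentially identical to the paper's argument: the same ghost-sample/exchangeability identity $\E_S L_\D(\A(S)) = \E_S\E_{x'}\E_{z\sim U(S)}\ell(\A(S\cup\{x'\}\setminus\{z\}),z)$ appears verbatim in the proof of Lemma~\ref{lemma:condition1}, together with the dummy-$x'$ rewrite of $\E_S L_S(\A(S))$, after which the OAROS bound (with Jensen) gives $\epsilon(n)$. The paper does not spell out a separate proof of Theorem~\ref{thm:stabilityNoOverfit} but simply reuses these identities; your write-up matches them step for step.
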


\noindent We continue with the proof of the theorem.
By Equation (\ref{eq:ERM_convergence}) it holds that
\begin{equation*}
\E_{\tilde{T}} \left[ L_T(\A(T)) - L_{\tilde{T}}(\A(\tilde{T})) \right] \leq \epsilon_2(n).
\end{equation*}
Then, using Theorem \ref{thm:stabilityNoOverfit},
\begin{equation*}
\E_{\tilde{T}} \left[ L_T(\A(T)) - L_{\D}(\A(\tilde{T})) \right] - \epsilon_1(n) \leq \epsilon_2(n),
\end{equation*}
We now have that 
\begin{align*}
&\epsilon_1(n) + \epsilon_2(n) \geq \E_{\tilde{T}} \left[ L_T(\A(T)) - L_\D(\tilde{T})) \right] = \\
& \E_{T}\E_{x'\sim \D} \left[ L_T(\A(T)) - \E_{x'\sim \D}L_\D(\A(T \cup \{x'\})) \right] = \\
& \E_{T}\E_{x'\sim \D}\E_{z\sim U(T)} \ell(\A(T), z) - \\
&\E_{T}\E_{x'\sim \D}E _{y \sim \D}\ell(\A(T \cup \{x'\}), y).
\end{align*}
By OAROS Stability this means
\begin{align*}
\E_{T}\E_{x'\sim \D}[ &\E_{z\sim U(T)}  \ell(\A(T \cup \{x'\} \backslash \{z\}, z) \\&- E _{y \sim \D}\ell(\A(T \cup \{x'\}), y)) ] \leq 2\epsilon_1(n) + \epsilon_2(n)
\end{align*}
Then, by Lemma \ref{lemma:finalCondition}
\begin{align*}
\E_{T,V} \E_{y,y' \sim U(V)} \left[ \ell(\A(T), y) - \ell(\A(S \cup \{ y'\}), y)\right] \\ \leq \pth{1 + \frac{1}{m}}\epsilon_1(n) + 2\epsilon_1(n) + \epsilon_2(n),
\end{align*}
and the proof is complete.
\end{proof}

% \newpage
%\textbf{Proof of Lemma \ref{lemma:condition1}}

%\textbf{Proof of Lemma \ref{lemma:finalCondition}}

\subsection{Validation Stability Bound}
Suppose our training set $T$ and validation set $V$ are both sampled from the same distribution $\D$ with $\left| V \right| = m, \left| S \right| = n$. Furthermore, let $z' \sim U(V)$. To show that we can sample from the validation set without over-fitting, we want to show that with probability at least $1-\delta$:
% * <tomzahavy@gmail.com> 2018-01-07T14:57:31.575Z:
%
% ^.
%\begin{equation}
%\label{eq:FinalObjective}
%\left| L_{ V \cup \{ z' \} } (\A (S \cup \{ z' \}) ) -  
%L_{ V \cup \{ z' \} } (\A (S) )
%\right| \le \epsilon (m)
%\end{equation}
\begin{equation}
\label{eq:FinalObjective}
\left| L_{ V } (\A (S \cup \{ z' \}) ) -  L_{ V } (\A (S) ) \right| \leq \frac{\epsilon (n,m)}{\delta}
\end{equation}
Then, if the validation loss doesn't change much, the order in which we pick the best model won't change as well. \\
We consider a weaker formulation to Equation (\ref{eq:FinalObjective}):
\begin{equation}
\label{eq:WeakObjective}
\E_{S,V} \E_{i,j \sim U(m)} \left| \ell(\A(S \cup \{ z_i\}), z_j) - \ell(\A(S), z_j)\right| \leq \epsilon(n,m)
\end{equation}
Let's see that Equation (\ref{eq:FinalObjective}) follows immediately from (\ref{eq:WeakObjective}) using Markov inequality. From (\ref{eq:WeakObjective}) and Jensen's inequality we have that
\begin{align*}
\epsilon(n,m) &\geq\E_{S,V}\E_{i \sim U(m)}  \\
& \quad \left| \E_{j \sim U(m)}  \ell(\A(S \cup \{ z_i\}), z_j) - \ell(\A(S), z_j)\right| = \\
& \E_{S,V}\E_{i \sim U(m)}  \left|    L_{ V } (\A (S \cup \{ z' \}) ) -  L_{ V } (\A (S) ) \right|.
\end{align*}
And recall that using Markov's inequality for R.V. ${X \geq 0}$ with $EX \leq M$ we have that with probability at least $1-\delta$, $X \leq \frac{M}{\delta}$. This gives us (\ref{eq:FinalObjective}) for ${X = \left| L_{ V } (\A (S \cup \{ z' \}) ) -  L_{ V } (\A (S) )\right|}$.

\subsection{Linear Regression Results}

We fit a 1-dimensional cubic function using noisy sampled examples and polynomial features. The dataset of 15 sampled points in total was divided to 10 examples for training and 5 examples for validation (sampled from the noisy distribution). The validation set was used to select the optimal order of polynomial to fit, with values 1, 2, 3. Similar to the the k-NN experiment, we tested the effect of biased validation and test sets on our sampling procedure by introducing artificial bias to the validation and test sets separately. The biased functions are drawn in Figure \ref{fig:reg_bias_plots}. 

\begin{figure}[h!]
\centering
\includegraphics[width=\linewidth]{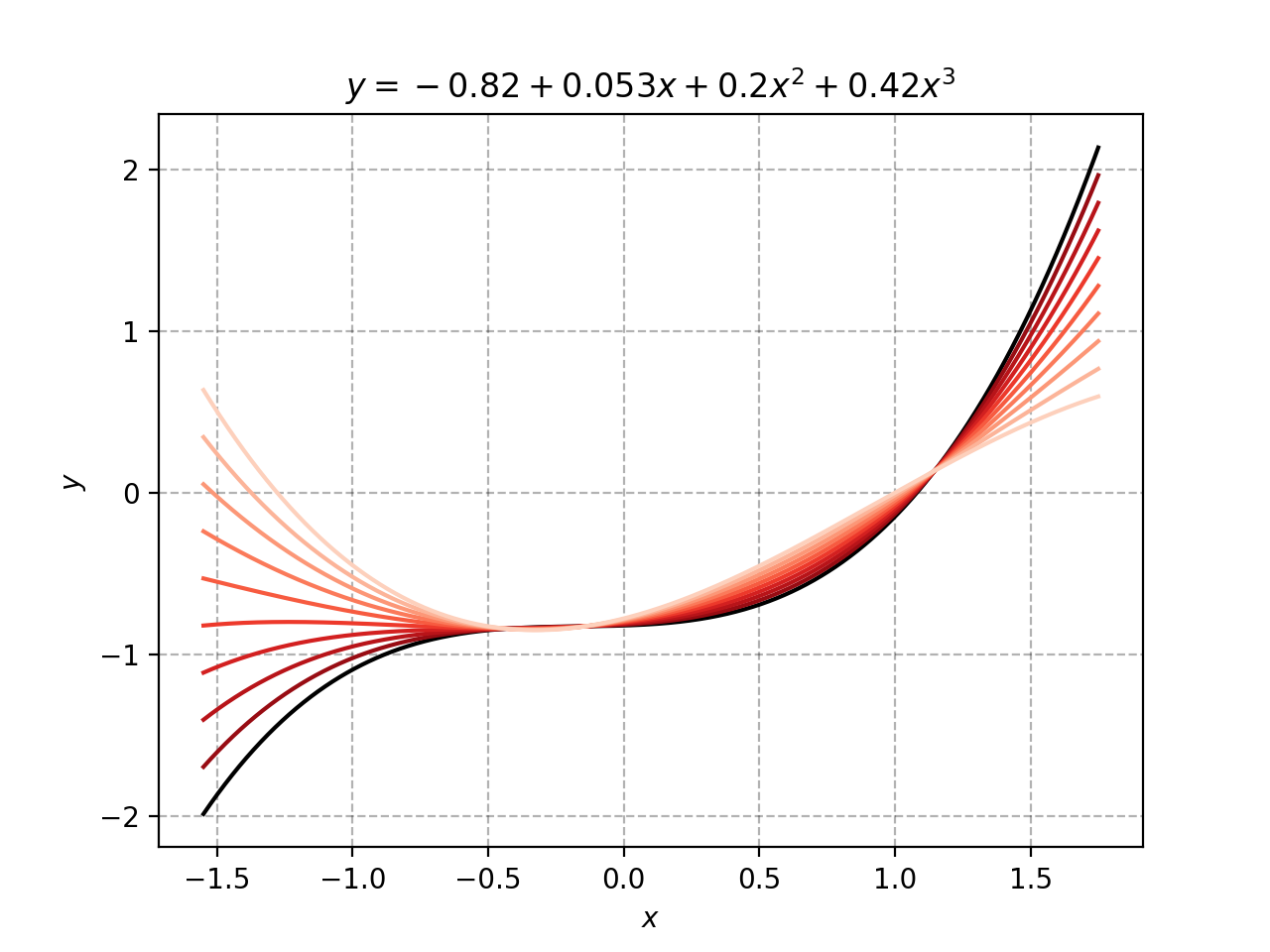}
\caption{Plot shows function used for regression. Black line shows line used for sampling training set. Red lines show biased lines used for creating biased validation and test sets.}
\label{fig:reg_bias_plots}
\end{figure}

\begin{figure}[h!]
\centering
\includegraphics[width=\linewidth]{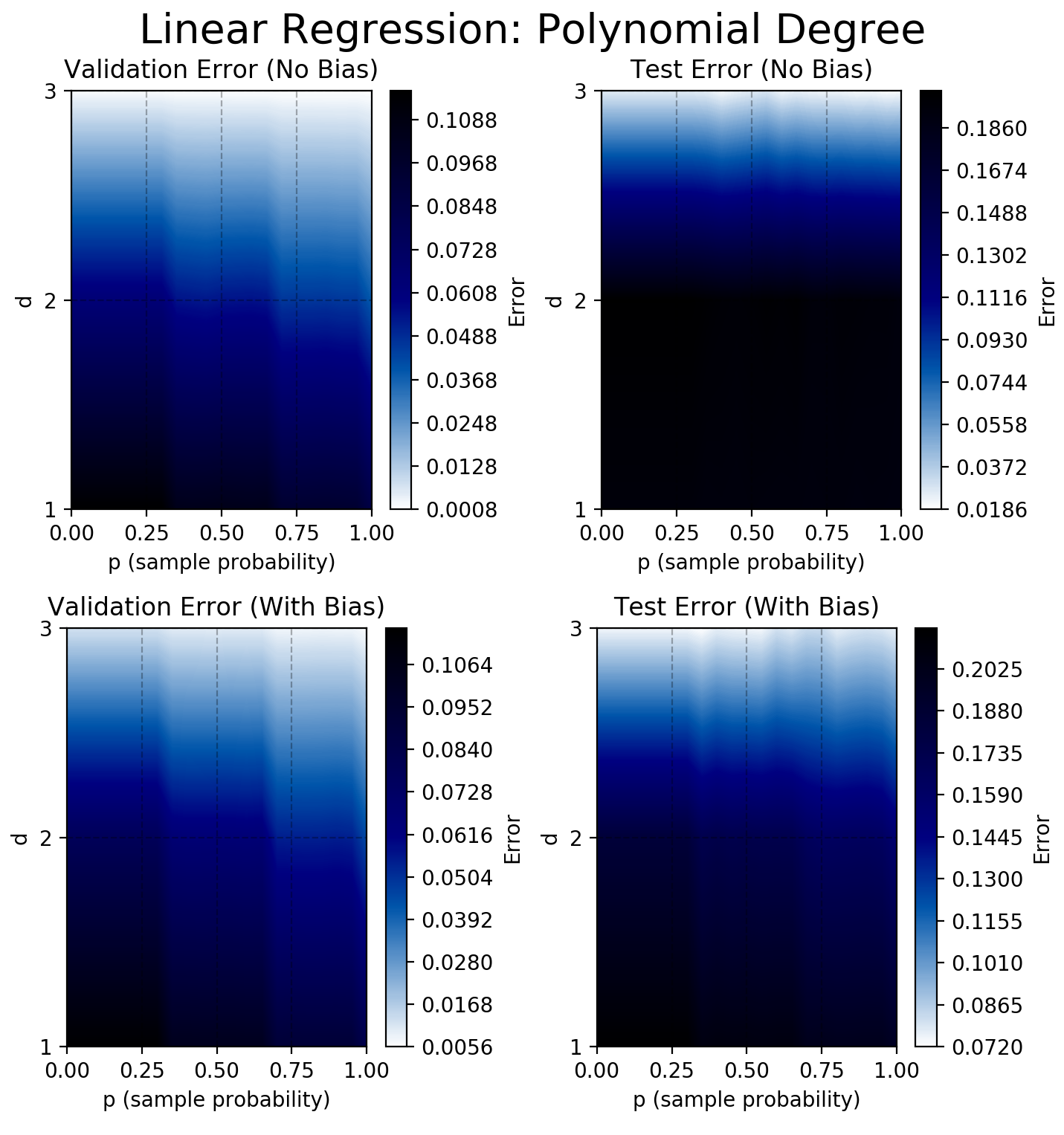}
\caption{Validation and test accuracies for the presample procedure using linear regression. Top graphs show results for unbiased validation and test sets, while bottom graphs show results for biased validation and test sets. }
\label{fig:linear_reg}
\end{figure}

Figure \ref{fig:linear_reg} shows mean error rates of the validation and test sets for different choices of polynomial degrees. Colors of the contour map range from black to blue to white, with black depicting lower accuracy scores, and white depicting highest accuracy. The x-axis of the map shows different $p$ values with $p$ ranging from $0$ to $1$. Every point on the grid shows a result of an experiment with a given $p$ and degree $d$. The plots show that an increase in $p$ brought to a uniform decrease in validation error, thereby allowing for a correct choice of model (even when the whole validation set was used for training). Error rates monotonically decrease in accordance to the sampling probability. These become most significant in the case of biased validation and test sets, where sampling is essential for the training process.

%}

\end{document}